\documentclass[runningheads]{llncs}

\usepackage{graphicx}
\usepackage{amsmath}
\usepackage{amssymb}
\usepackage{amstext}
\usepackage{thm-restate}
\usepackage{bm}
\usepackage{float}
\usepackage[thinlines]{easytable}
\usepackage[numbers,sort&compress]{natbib}
\usepackage{tikz}
\usepackage{pgfplots}
\usepackage{dsfont}


\DeclareMathOperator{\bias}{Bias}
\DeclareMathOperator{\E}{\mathbb{E}}
\DeclareMathOperator{\D}{\mathcal{D}}

\newcommand*\dif{\mathop{}\!\mathrm{d}}
\newcommand{\LO}{L_{\text{overfit}}}

\DeclareMathOperator*{\argmin}{\arg\!\min}


\pgfplotsset{compat=1.16}

\usetikzlibrary{arrows,automata,positioning,calc,shapes.multipart,chains,babel,fit,backgrounds}

\tikzset{%
    block/.style={draw, fill=white, rectangle, 
            minimum height=2em, minimum width=2em},
    input/.style={inner sep=0pt},       
    output/.style={inner sep=0pt},      
    sum/.style = {draw, fill=white, circle, minimum size=2mm, node distance=1.5cm, inner sep=0pt},
    pinstyle/.style = {pin edge={to-,thin,black}}
}
\tikzstyle{block} = [draw, rectangle, text width=1.5cm, text centered, minimum height=1cm, node distance=3.3cm,fill=white]
\tikzstyle{container} = [draw, rectangle, inner sep=0.8cm, fill=white,minimum height=2cm]
\def\bottom#1#2{\hbox{\vbox to #1{\vfill\hbox{#2}}}}
\tikzset{
  mybackground/.style={execute at end picture={
      \begin{scope}[on background layer]
        \node[] at (current bounding box.south){\bottom{1cm} #1};
        \end{scope}
    }},
}

\begin{document}

\title{An Information-Theoretic Perspective on Overfitting and Underfitting}

\author{Daniel Bashir,
George D.\ Monta{\~n}ez,
Sonia Sehra,
Pedro Sandoval Segura,
Julius Lauw
}
\authorrunning{Bashir et al.}
\institute{{AMISTAD Lab\\Department of Computer Science, Harvey Mudd College, CA, USA}\\
\email{\{dbashir,gmontanez,ssehra,psandovalsegura,julauw\}@hmc.edu}
}

\maketitle  

\begin{abstract}
{
We present an information-theoretic framework for understanding overfitting and underfitting in machine learning and prove the formal undecidability of determining whether an arbitrary  classification algorithm will overfit a dataset.  Measuring algorithm capacity via the information transferred from datasets to models, we consider mismatches between algorithm capacities and datasets to provide a signature for when a model can overfit or underfit a dataset. We present results upper-bounding algorithm capacity, establish its relationship to quantities in the algorithmic search framework for machine learning, and relate our work to recent information-theoretic approaches to generalization. 
}
\keywords{overfitting, underfitting, algorithm capacity}
    
\end{abstract}

\section{\uppercase{Introduction}}
\label{sec:introduction}
\noindent
Overfitting and underfitting are constant and ubiquitous dangers in machine learning. The goal of supervised learning is to approximate or fit a true signal relating features $X$ to responses $Y$, which can be interpreted as a function $f:X\to Y$ or a distribution $P(Y|X)$. Overfitting occurs when an algorithm reduces error through memorization of training examples, with noisy or irrelevant features, rather than learning the true general relationship between $X$ and $Y$ \cite{zhang2016nn,krueger2017deep}. Underfitting occurs when an algorithm lacks sufficient model capacity or sufficient training to fully learn the true relationship, whether through memorization or not. A \emph{learning algorithm} (our general term for a machine learning approach that processes data to produce models or hypotheses) is equipped with a hypothesis space $\mathcal{G}$ which contains the potential guesses the algorithm may have for the target function (or distribution). The hypotheses in $\mathcal{G}$ can overfit when the complexity of the available hypotheses are mismatched to the complexity of the true signal being learned, allowing the act of model selection to become equivalent to the setting of bits in a general memory storage device, recording memorized label mappings as a short-cut for reducing training error.

While every machine learning practitioner is warned to avoid the twin pitfalls of overfitting and underfitting, theory related to both remains underdeveloped, largely relying on folk-wisdom and heuristic approaches. In particular, beyond intuition and comparative arguments, there is no formalized set of criteria for whether a particular algorithm will overfit or underfit a given dataset.

Of course, there exist well-known characterizations of algorithm complexity. The VC dimension provides a loose upper bound on model complexity in general, while more recent characterizations like Rademacher complexity seek to provide a normalized comparative measure. To measure the complexity of datasets as opposed to algorithms or models, Li and Abu-Mostafa's work provides a useful characterization \cite{li2006complexity}. Even given this existing work, recent papers (e.g., \cite{zhang2016nn}) have correctly pointed out that current theories of generalization are not enough to explain phenomena like the observed performance of deep neural networks. 

The search framework proposed in~\cite{montanez2017famine} abstracts the machine learning problem into a more general search setting. Using recent developments from this framework coupled with information-theoretic insights, we can gain a new perspective on issues of algorithm capacity, overfitting, and underfitting. Our definitions of algorithm capacity closely resemble recent information-theoretic analyses that consider the input-output mutual information of a learning algorithm, such as those in~\cite{xu-raginsky} and~\cite{pmlr-v83-bassily18a}, and bring to mind empirical \emph{generalized information} \cite{generalizedinfo}.

Our manuscript's primary contributions are as follows:
\begin{enumerate}
    \item We show that the general problem of determining whether an arbitrary algorithm will overfit a particular dataset is formally undecidable, by a reduction from the halting problem. This leads us to focus on special cases where overfitting determinations can be made, such as with restricted complexity learning algorithms~\cite{vapnik1999overview}.
    
    \item We develop a framework for explaining the phenomenon of overfitting and underfitting from an information-theoretic perspective, using notions of \emph{algorithm capacity} to redefine both pitfalls and demonstrate how those notions interact with existing work. 
\end{enumerate}

The remainder of the paper is organized as follows. In Section~\ref{sec:UNDECIDABILITY} we present the formal undecidability of the overfitting detection problem for classification algorithms under the standard observational definition of overfitting. Section~\ref{sec:DEFINITIONS} introduces our notions of capacity as well as new definitions for overfitting and underfitting in terms of capacity. We then present bounds on algorithm capacity, and conclude with a discussion of the significance of the results and future work. We begin by introducing some basic concepts and notational conventions, before looking at the formal undecidability of overfitting in the next section.

\subsection{Algorithms and Datasets}

A learning algorithm $\mathcal{A}$ may be viewed as a stochastic map $P_{G|D}$ that takes as input a training set $D$ of size $n$, namely $D = (Z_1,...,Z_n)$ whose elements belong to an instance space $\mathcal{Z}$ and are typically sampled i.i.d. according to an unknown distribution $\mathcal{D}$, and outputs a hypothesis in its hypothesis space, $g \in \mathcal{G}$. Each instance $Z_i$ in the dataset may also represent a pair $(\textbf{x}_i,y_i)$ where $\textbf{x}_i$ is a feature vector and $y_i$ is the corresponding label or response. $G$ denotes the random variable representing the output of $\mathcal{A}$ with $D$ as input.

Within this work, we limit ourselves to discrete hypothesis spaces and datasets. This allows us to use the discrete entropy throughout and reflects finite-precision numerical representations in all modern digital hardware. However, it also poses certain mathematical restrictions, such as excluding reproducing kernel Hilbert spaces and other continuous mathematical spaces from our consideration.

\section{\uppercase{Formal Undecidability of Overfitting}}\label{sec:UNDECIDABILITY}

Traditionally, overfitting is diagnosed by comparing the losses of an algorithm on training and tests datasets, where the error on the test set (average observed loss) is intended to approximate the true risk (expected loss). Observationally, if the true risk $R_{\mathcal{D}}(g)$ (estimated by sampling of test datasets) exceeds the empirical risk $\widehat{R}_D(g)$ (the risk on the training dataset), the algorithm is said to \textbf{overfit}.

\begin{definition}[Overfitting (Observational)]
\label{defn:obs_overfitting}
Algorithm $\mathcal{A}$ \textbf{overfits} dataset $D$ if it selects a hypothesis $g \in \mathcal{G}$ such that $R_{\mathcal{D}}(g) > \widehat{R}_D(g).$
\end{definition}
Under this definition, the problem of determining whether an arbitrary classification algorithm will overfit an arbitrary dataset is formally undecidable.

\begin{restatable}[Formal Undecidability of Overfitting]{theorem}{undecidabilityOverfitting}\label{thm:UNDECIDABLE-OVERFITTING}

Let $S$ be the set of all encodable classification algorithms, let $\langle \mathcal{A}\rangle$ denote the encoded form of algorithm $\mathcal{A}$, and let $D$ denote a dataset. Then, $$\LO = \{\langle \mathcal{A}\rangle, D | \mathcal{A} \in  S, \mathcal{A} \text{ will overfit } D\}$$ is undecidable.
\end{restatable}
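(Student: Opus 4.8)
The plan is to prove undecidability by a many-one reduction from the halting problem $\mathrm{HALT} = \{\langle M\rangle, w \mid M \text{ halts on } w\}$, which is known to be undecidable. I would exhibit a computable map sending each instance $\langle M\rangle, w$ to a pair $\langle \mathcal{A}_{M,w}\rangle, D$ with the property that $M$ halts on $w$ if and only if $\mathcal{A}_{M,w}$ overfits $D$ in the sense of Definition~\ref{defn:obs_overfitting}. Since a hypothetical decider for $\LO$ composed with this map would decide $\mathrm{HALT}$, no such decider can exist.

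First I would fix, once and for all, a trivial binary classification instance on which memorization provably overfits: take a finite instance space $\mathcal{Z}$ with underlying distribution $\mathcal{D}$ of full support, a fixed training set $D = (Z_1,\dots,Z_n)$, and a lookup-table hypothesis $g_1$ that reproduces the labels of $D$ exactly while mislabeling every other point. By construction $\widehat{R}_D(g_1) = 0$, whereas $R_{\mathcal{D}}(g_1) > 0$ because $\mathcal{D}$ places positive mass outside $D$ where $g_1$ errs; hence $g_1$ overfits $D$. This fixes the ``overfitting witness'' that the reduction will deploy.

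Next I would define the algorithm $\mathcal{A}_{M,w}$: on any input dataset it first simulates $M$ on $w$, and if and only if that simulation halts does it then emit the memorizing hypothesis $g_1$. The encoding $\langle \mathcal{A}_{M,w}\rangle$ is clearly computable from $\langle M\rangle, w$, and $\mathcal{A}_{M,w}$ is a legitimate encodable classification algorithm, so it lies in $S$. The verification then splits into two cases. If $M$ halts on $w$, the simulation terminates, $\mathcal{A}_{M,w}$ selects $g_1$, and since $R_{\mathcal{D}}(g_1) > \widehat{R}_D(g_1)$ we get $\langle \mathcal{A}_{M,w}\rangle, D \in \LO$. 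If $M$ does not halt on $w$, the simulation runs forever, $\mathcal{A}_{M,w}$ never selects any hypothesis, and the overfitting condition of Definition~\ref{defn:obs_overfitting} is vacuously unmet, giving $\langle \mathcal{A}_{M,w}\rangle, D \notin \LO$. This establishes the biconditional and completes the reduction.

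I expect the main obstacle to be conceptual rather than computational: pinning down the semantics of ``will overfit'' for a computation that may never terminate, and arguing that a non-halting run should count as non-overfitting precisely because no hypothesis is ever selected. A secondary point requiring care is making the overfitting witness airtight---guaranteeing $R_{\mathcal{D}}(g_1) > \widehat{R}_D(g_1)$ strictly---which is why I would fix the distribution and dataset concretely rather than leave $\mathcal{D}$ abstract. Once those modeling choices are locked in, the reduction itself and the case analysis are routine.
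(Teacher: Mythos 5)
Your reduction is correct and shares the paper's core idea: a computable map from $\langle M\rangle, w$ to an algorithm that memorizes a fixed training set via a look-up table exactly when the simulation of $M$ on $w$ halts, so that overfitting in the sense of Definition~\ref{defn:obs_overfitting} occurs if and only if $M$ halts. The one genuine divergence is how the non-halting branch is certified. The paper makes $\mathcal{A}'$ an iterative, anytime learner that on its first iteration instantiates a concrete model that is \emph{maximally wrong} on $D$ (and uniformly random off $D$); if $M$ never halts this model persists forever, and its training error strictly exceeds its expected test error (needing only $\mathcal{D}(D)\neq 1$), so non-overfitting is witnessed by an actual selected hypothesis at every time step. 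Your $\mathcal{A}_{M,w}$ instead diverges without ever emitting a hypothesis, and you argue non-overfitting vacuously. That argument does go through: under a strict reading of Definition~\ref{defn:obs_overfitting} no hypothesis is selected, so the pair falls outside $\LO$; and even if one objects that a never-outputting procedure is not a classification algorithm at all, the pair then fails the $\mathcal{A}\in S$ conjunct and is outside $\LO$ anyway, so your biconditional survives either reading. What the paper's construction buys is robustness to the semantics of ``will overfit, once training ends or asymptotically'': the asymptotic behavior of $\mathcal{A}'$ is well-defined in both branches because a model exists at every iteration, so no appeal to vacuity or to membership conventions for $S$ is needed---precisely the conceptual obstacle you flagged. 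What your version buys is a marginally cleaner formal statement as an explicit many-one reduction (the paper phrases the argument as a contradiction with a hypothesized decider $M_{\text{overfit}}$, which is equivalent), and your insistence on fixing $\mathcal{D}$, $D$, and the look-up hypothesis $g_1$ concretely, with $\mathcal{D}$ of full support, correctly pins down the strict inequality $R_{\mathcal{D}}(g_1) > \widehat{R}_D(g_1)$ on which the halting direction rests.
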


\textit{Proof sketch}: We show $\LO$ is undecidable by a reduction from the halting problem. By way of contradiction, if $\LO$ is decidable then there exists a Turing machine, $M_{\text{overfit}}$, which for all inputs of the form $\langle \mathcal{A}\rangle$, $D$ halts and determines whether $\mathcal{A}$ will overfit $D$ either once training ends or asymptotically.

We construct a decider for $L_{\text{halt}}$: the machine \textit{$\mathcal{A}'$ builder} takes as input $\langle M \rangle, w$ and constructs an encoded algorithm $\mathcal{A}'$ which represents an iterative machine learning method, and exports the encoded algorithm along with a training dataset $D \sim \mathcal{D}$. On its first iteration, $\mathcal{A}'$ instantiates a machine learning model which produces maximally wrong (under the fixed loss function) response values for all examples in $D$, and uniformly randomly guesses response values for all examples not in $D$. Then $\mathcal{A}'$ will have lower expected error on any test set drawn from $\mathcal{D}$ than its observed (maximal) error on $D$, and will not overfit.

Next, $\mathcal{A}'$ simulates $M$ on $w$. If $M$ halts on $w$, $\mathcal{A}'$ memorizes dataset $D$ via a look-up table, still uniformly randomly guessing examples not in $D$. If $M$ does not halt on $w$, the original model remains. Thus, $\mathcal{A}'$ will overfit if and only if $M$ halts on $w$: asking $M_{\text{overfit}}$ whether $\mathcal{A}'$ will overfit on $D$ tells us if $M$ halts on $w$. $\qed$

Theorem~\ref{thm:UNDECIDABLE-OVERFITTING} tells us that there can exist no general purpose overfitting detector that can perfectly determine whether an arbitrary algorithm will overfit a dataset if trained to completion or for enough iterations (with respect to iterative methods). However, a less accurate detector can exist, such as a uniform random guesser. Given that accuracy on this problem ranges between $0$ and $100$,  exclusive, the question becomes: how much can we improve an overfitting detector to get accuracy closer to (though never quite reaching) 100\%? The proof for Theorem~\ref{thm:UNDECIDABLE-OVERFITTING} hints at what property can be leveraged to improve such detectors. In the proof, we rely on the fact that the algorithm can memorize a dataset of arbitrary complexity, essentially giving it unlimited algorithmic capacity. Put another way, we assumed a model family with infinite VC dimension, able to discriminate any set of points, no matter how large. Since, in classification settings, restricted VC dimension is both necessary and sufficient for uniform convergence of empirical error to generalization error, this suggests that comparing algorithm capacity to dataset complexity could play a role in improving such detection methods.

\section{\uppercase{Definitions and Terminology}}
\label{sec:DEFINITIONS}
\subsection{Capacity, Overfitting, and Underfitting}
We present a set of definitions and theorems that characterize our view of learning algorithms, inspired in part by the \emph{algorithmic search framework} \cite{montanez2017famine}.

In the search framework, a search problem is specified by a tuple $(\mathrm{\Omega},T,F)$ consisting of a search space $\mathrm{\Omega}$, a target subset $T \subseteq \mathrm{\Omega}$, and an external information resource $F$. A search algorithm $\mathcal{A}$ at time $i$ in its execution computes a probability distribution $P_i$ over $\mathrm{\Omega}$ and samples an element $\omega_i \in \mathrm{\Omega}$ according to $P_i$, resulting in a sequence of distributions $\tilde{P} = [P_1, P_2,\ldots,P_N]$. $\mathcal{A}$ maintains a history $h$, where each timestep $h_i = (\omega_i,F(\omega_i))$ contains the element of $\mathrm{\Omega}$ that $\mathcal{A}$ sampled at time $i$ and the information about $\omega_i$ provided by $F$. A search is considered successful if, given a fixed target set $t$, $\mathcal{A}$ samples an element $\omega_i \in t$, where $t$ is represented by a $|\mathrm{\Omega}|$-length binary target vector $\mathbf{t}$, which has a $1$ at index $i$ if and only if $\omega_i \in t$, namely, $\mathbf{t}_i = \mathds{1}_{\omega_i \in t}$ where $\mathds{1}_{\_}$ denotes the indicator function. The quantity $q(t,F)$ denotes the expected per-query probability of success, or the expected probability mass placed on $t$.

For our purposes, the search space $\mathrm{\Omega}$ is the hypothesis space $\mathcal{G}$ of $\mathcal{A}$. The information resource has two components. The initial information $F(\emptyset)$ is a training dataset $D$ of size $n$, while $F(h)$ is a non-negative loss function $\ell : \ \mathcal{G} \times \mathcal{Z} \rightarrow \mathbb{R}_{\geq 0}$. The target set $T$ consists of all $g \in \mathcal{G}$ that achieve low population risk, namely, $$R_{\mathcal{D}}(g) = \mathbb{E}_{\D}[\ell(g,\mathcal{Z})] = \int_{\mathcal{Z}} \ell(g,z)\mathcal{D}(\dif z) < \epsilon$$
for some fixed scalar $\epsilon > 0$ for any data-generating distribution $\mathcal{D}$. The history of sampled elements corresponds to hypotheses considered as $\mathcal{A}$ is trained, using a method such as stochastic gradient descent. 

Since $\mathcal{D}$ is unknown, we can instead compute the empirical risk of $g$ on dataset $D$ as $$\widehat{R}_D(g) := \frac{1}{n}\sum_{i=1}^n \ell(g,z_i)$$
in the search of a hypothesis $g^* = \argmin_{g\in \mathcal{G}} R_{\mathcal{D}}(g)$, for which we choose as a proxy the empirical risk minimizer (ERM hypothesis), $\hat{g} = \argmin_{g\in \mathcal{G}} \widehat{R}_D(g)$.

We will frequently use the term ``capacity'' to describe the learning capabilities of an algorithm, as opposed to algorithm complexity, which indicates the expressiveness of functions in the algorithm's hypothesis space $\mathcal{G}$ (e.g., linear functions for a regression model).

\begin{definition}[Algorithm Capacity]
\label{defn:capacity}
The \textbf{capacity} $C_{\mathcal{A}}$ of an algorithm $\mathcal{A}$ is the maximum amount of information that $\mathcal{A}$ can extract from a dataset $D \sim \mathcal{D}$ when selecting its output hypothesis $g$, namely, $$C_{\mathcal{A}} = \underset{\mathcal{D}}{\sup}\;  I(G;D)$$
where $G$ takes values in $\mathcal{G}$. 
\end{definition}
$P(G|D)$ is fixed by the algorithm $\mathcal{A}$. Therefore, our definition for capacity is equivalent to the input-output mutual information measure given in~\cite{xu-raginsky} and can be viewed as the maximum capacity of an information channel from $\mathcal{Z}^n$ to $\mathcal{G}$. Note that the maximum amount of information an algorithm $\mathcal{A}$ may transfer from a dataset in selecting a hypothesis is the number of bits required to memorize a one-to-one mapping between each feature-label pair in that dataset.

For a fixed distribution $\D$, we can define the capacity relative to that particular distribution, which is simply the mutual information.
\begin{definition}[Distributional Algorithm Capacity]
\label{defn:distributional-capacity}
For $D\sim \D$,
$$C_{\mathcal{A},\D} = I(G;D).$$
\end{definition}

In an algorithm's search for an ERM hypothesis by an iterative method such as gradient descent, we may regard each iteration as a timestep and observe that by a time $i$, $\mathcal{A}$ will have sampled only a subset of $\mathcal{G}$. This may reduce the entropy of a variable $G_i$ drawn from the expected $i$th distribution, $\E[P_i|D]$, and motivates the following definition.

\begin{definition}[Time-indexed Capacity]
\label{defn:time}
Let $P_i$ denote the (stochastic) probability distribution over $\mathcal{G}$ at time $i$. $\mathcal{A}$'s capacity \textit{at time i} is the maximum amount of information $\mathcal{A}$ may transfer from $D \sim \D$
to $G_i \sim \E[P_i|D]$,
\begin{align*}
    C^i_{\mathcal{A}} &= \sup_{\D} I(G_i;D).
\end{align*}
\end{definition}

Finally, we define the pointwise information transfer by an algorithm from a specific dataset to a specific hypothesis.
\begin{definition}[Pointwise Information Transfer]
\label{defn:pointwise-transfer}
For a given dataset $d$ and specific hypothesis $g$, the \textbf{pointwise information transfer} by algorithm $\mathcal{A}$ from $d$ to $g$ is the pointwise mutual information (lift),
\begin{align*}
    C_{\mathcal{A}}(g,d) &= \log_2 \frac{p(g,d)}{p(g)p(d)} = \log_2 \frac{p(g|d)}{p(g)} = \log_2 \frac{p(d|g)}{p(d)}. 
\end{align*}
\end{definition}
Note that $p(g|d)$ captures how representative a hypothesis is of a dataset (i.e., how deterministic is the algorithm?), while $p(d|g)$ measures how identifiable a dataset is given a hypothesis (i.e., how many datasets strongly map to $g$?). For deterministic algorithms, $C_{\mathcal{A}}(g,d)$ becomes the Shannon surprisal of the set of datasets producing $g$, $C_{\mathcal{A}}(g,d) = -\log_2 \D(S)$, where $S =\{d' \in \mathcal{Z}^n \mid \mathcal{A}(d') = g\}$ is the collection of datasets $d'$ such that $p(g|d') = 1$ under $\mathcal{A}$. Taking the expectation with respect to $G$ and $D$, we see that $\E_{G,\D}[C_{\mathcal{A}}(G,D)] = C_{\mathcal{A},\D}$.

Having provided definitions of algorithm capacity, we next consider dataset complexity. Comparing an algorithm's capacity to the complexity of the dataset it is trained on may give insight into whether the algorithm will overfit or underfit. We begin with a definition based on algorithmic compressibility from~\cite{li2006complexity}.

\begin{definition}[Dataset Turing Complexity]
Given a fixed Turing machine $M$ that accepts a string $p$ and feature vector $\mathbf{x}$ as input and outputs a label $y$, the \textit{data complexity} of a dataset $D$ is $$C_{D,M} = L(\langle M \rangle) + L(p)$$
where $L(p) = \min \{|p|: \forall (\mathbf{x},y) \in D, M(p,\mathbf{x}) = y\}$. That is, the data complexity $C_{D,M}$ is the length of the shortest program that correctly maps every input in the data set $\mathcal{D}$ to its corresponding output. 
\end{definition}
For a dataset $D = (Z_1,...,Z_n)$, the above definition contrasts with $C'_D = \sum_{i=1}^nb(z_i)$, where $b(z_i)$ is the number of bits required to encode the feature-label pair $z_i$ without compression. $C'_D$ gives the number of bits required to memorize an arbitrary dataset $D$. Taking the minimum of these two defined quantities gives us our definition of dataset complexity.
\begin{definition}[Dataset Complexity $C_D$]
$C_D = \min \{C_{D,M}, C'_D\}$.
\end{definition}
By construction $C_D \leq C'_D$, giving us a computable upper bound on dataset complexity. While $C_{D,M}$ is not explicitly computable, methods for estimating the quantity are proposed in~\cite{li2006complexity}, to which we refer the interested reader. Given the definitions of dataset complexity and algorithm capacity, we can now define overfitting in information-theoretic terms.
\begin{definition}[Overfitting]
\label{defn:overfitting}
An algorithm $\mathcal{A}$ \textbf{overfits} if $$C_{\mathcal{A},\D} > \E_{\mathcal{D}}[C_D],$$
i.e., the algorithm tends to extract more bits than necessary to capture the noiseless signal from the dataset. The degree of overfitting is given by $C_{\mathcal{A},\D} - \E_{\mathcal{D}}[C_D]$.
\end{definition}

Like overfitting, we can also give an information-theoretic definition for underfitting, based on the time-indexed capacity from Definition~\ref{defn:time}.
\begin{definition}[Underfitting]
An algorithm $\mathcal{A}$ \textbf{underfits}  at iteration $i$ if $$C^i_{\mathcal{A}} < \E_{\mathcal{D}}[C_D]$$
i.e., after training for $i$ timesteps, $\mathcal{A}$ has capacity strictly less than $\E_{\mathcal{D}}[C_D]$.
\end{definition}

If the algorithm's model does not contain enough information to accomplish a learning task, this could be the result of insufficient capacity, insufficient training, or insufficient information retention, all of which are captured by $C^i_{\mathcal{A}}$.

Lastly, by Definition~\ref{defn:pointwise-transfer} we can define the overfitting between a fixed hypothesis (model) $g$ and a fixed dataset $d$, related to the \emph{generalized information} of Bartlett and Holloway \cite{generalizedinfo} when considering $C_{\mathcal{A}}(g,d) - C_d$.
\begin{definition}[Model Overfit]
$\mathcal{A}$'s \textbf{model $g$ overfits $d$} if $C_{\mathcal{A}}(g,d) > C_d$.
\end{definition}
Because $C_{D,M}$ is uncomputable, one cannot generally determine model overfit whenever $C_{\mathcal{A}}(g,d) \leq C'_d$ (in agreement with Theorem~\ref{thm:UNDECIDABLE-OVERFITTING}). However, one can claim model overfitting for the special case of $C_{\mathcal{A}}(g,d) > C'_d \geq C_d$.

\subsection{Capacity, Bias, and Expressivity}

We now review quantitative notions of bias and expressivity introduced in~\cite{lauw2020bias}. Just as the estimation bias of a learning algorithm trades off with its variance, the algorithmic bias also trades off with expressivity, which loosely captures how widely a learning algorithm distributes probability mass over its search space in expectation. Naturally, this will be affected by how well an algorithm's inductive bias aligns with the target vector. To that end, we introduce the inductive orientation of an algorithm.

\begin{definition}[Inductive Orientation]
Let $F$ be an external information resource, such as a dataset, $\tilde{P}$ be defined as above,  $H$ be an algorithm's search history, and let $$\overline{\mathbf{P}}_{F} := \mathbb{E}_{\tilde{P}, H} \left[ \frac{1}{|\tilde{P}|} \sum_{i=1}^{|\tilde{P}|} \mathbf{P}_i \bigg| F \right].$$ That is, $\overline{\mathbf{P}}_{F}$ is the algorithm's expected average conditional distribution on the search space given $F$. Then the \textbf{inductive orientation} of an algorithm is 
\begin{align}
    \overline{\mathbf{P}}_{\D}&= \E_{F\sim\D}[\overline{\mathbf{P}}_{F}]
\end{align}
\end{definition}

We may now define the entropic expressivity of an algorithm.

\begin{definition}[Entropic Expressivity]
The \textbf{entropic expressivity} of an algorithm is the Shannon entropy of its inductive orientation,
\begin{align*}
        H(\overline{\mathbf{P}}_{\D}) 
            &= H(\mathcal{U}) - D_{\mathrm{KL}}(\overline{\mathbf{P}}_{\D} \;||\; \mathcal{U})
    \end{align*}
where $D_{\mathrm{KL}}(\overline{\mathbf{P}}_{\D} \;||\; \mathcal{U})$ is the Kullback-Leibler divergence between distribution $\overline{\mathbf{P}}_{\D}$ and the uniform distribution $\mathcal{U}$, and both are distributions over the search space $\mathrm{\Omega}$.
\end{definition}

Lauw et al.\  demonstrate a quantitative trade-off between the entropic expressivity and the bias of a learning algorithm~\cite{lauw2020bias}. This trade-off will allow us to relate algorithm capacity to bias, as well. As our hypothesis space $\mathcal{G}$ is the relevant search space here, we substitute $\mathcal{G}$ for $\mathrm{\Omega}$
 throughout.
 
\begin{definition}
\label{def:bias_D}
    (Algorithmic Bias) Given a fixed target function \(\mathbf{t}\) corresponding to the target set $t$, let $p = \|\mathbf{t}\|^2/|\mathcal{G}|$ denote the expected per-query probability of success under uniform random sampling, $\mathbf{P}_{\mathcal{U}} = \bm{1}\cdot |\mathcal{G}|^{-1}$ be the inductive orientation vector for a uniform random sampler, and $F \sim \mathcal{D}$, where $\D$ is a distribution over a collection of information resources $\mathcal{F}$. Then, 
    \begin{align*}
        \bias(\mathcal{D}, \mathbf{t})
        &= \E_{\mathcal{D}}[q(t,F) - p]\\
        &= \mathbf{t}^{\top}( \overline{\mathbf{P}}_{\D} - \mathbf{P}_{\mathcal{U}})\\
        &= \mathbf{t}^\top\mathbb{E}_{\mathcal{D}} \left[ \overline{\mathbf{P}}_{F}\right] - \mathbf{t}^{\top} (\bm{1}\cdot |\mathcal{G}|^{-1}) \\
        &= \mathbf{t}^\top  \int_{\mathcal{F}} \overline{\mathbf{P}}_{f} \mathcal{D}(f) \dif f - \frac{\|\mathbf{t}\|^2}{|\mathcal{G}|}.
    \end{align*}
\end{definition}

Following~\cite{lauw2020bias}, we re-state the bias-expressivity trade-off:

\begin{restatable}[Bias-Expressivity Trade-off]{theorem}{tradeoff}
    \label{thm:tradeoff}
    Given a distribution over information resources $\D$ and a fixed target $t \subseteq \mathcal{G}$, entropic expressivity is bounded above in terms of bias,
    $$H(\overline{\mathbf{P}}_{\D}) \leq \log_2 |\mathcal{G}| - 2 \bias(\D, \mathbf{t})^2.$$ 
    Additionally, bias is bounded above in terms of entropic expressivity,
    \begin{align*}
        \bias(\D,\mathbf{t}) 
            &\leq \sqrt{\frac{1}{2}(\log_2|\mathcal{G}| - H(\overline{\mathbf{P}}_{\D}))} \\
            &= \sqrt{\frac{1}{2} D_{\text{KL}}(\overline{\mathbf{P}}_{\D} \;||\; \mathcal{U})}.
    \end{align*} 
\end{restatable}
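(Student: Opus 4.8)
The plan is to reduce both displayed inequalities to a single claim and then attack it with a standard information-theoretic tool. First I would observe that the definition of entropic expressivity already supplies the identity $\log_2|\mathcal{G}| - H(\overline{\mathbf{P}}_{\D}) = D_{\mathrm{KL}}(\overline{\mathbf{P}}_{\D} \,\|\, \mathcal{U})$, since $H(\mathcal{U}) = \log_2|\mathcal{G}|$. Consequently the two stated inequalities are equivalent: each is a rearrangement of the single inequality $2\,\bias(\D,\mathbf{t})^2 \le D_{\mathrm{KL}}(\overline{\mathbf{P}}_{\D} \,\|\, \mathcal{U})$. The whole proof therefore comes down to lower-bounding the divergence between the inductive orientation and the uniform distribution by twice the squared bias.

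Next I would reinterpret the bias as a difference of probability masses. Because $\mathbf{t}$ is a $0$--$1$ target vector, the quantity $\mathbf{t}^\top \overline{\mathbf{P}}_{\D}$ is exactly the mass $\overline{\mathbf{P}}_{\D}(t)$ that the inductive orientation assigns to the target set $t$, and likewise $\mathbf{t}^\top \mathbf{P}_{\mathcal{U}} = \mathbf{P}_{\mathcal{U}}(t)$. Hence $\bias(\D,\mathbf{t}) = \overline{\mathbf{P}}_{\D}(t) - \mathbf{P}_{\mathcal{U}}(t)$ is the gap in target-set probability between the two distributions. Since the total variation distance is the supremum of such gaps over all events, $|\bias(\D,\mathbf{t})| = |\overline{\mathbf{P}}_{\D}(t) - \mathbf{P}_{\mathcal{U}}(t)| \le \sup_{A \subseteq \mathcal{G}} |\overline{\mathbf{P}}_{\D}(A) - \mathbf{P}_{\mathcal{U}}(A)| = \|\overline{\mathbf{P}}_{\D} - \mathbf{P}_{\mathcal{U}}\|_{TV}$.

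The final ingredient is Pinsker's inequality, which bounds total variation by relative entropy: $\|\overline{\mathbf{P}}_{\D} - \mathbf{P}_{\mathcal{U}}\|_{TV} \le \sqrt{\tfrac{1}{2} D_{\mathrm{KL}}(\overline{\mathbf{P}}_{\D} \,\|\, \mathcal{U})}$. Chaining this with the total-variation bound and squaring gives $\bias(\D,\mathbf{t})^2 \le \tfrac{1}{2} D_{\mathrm{KL}}(\overline{\mathbf{P}}_{\D} \,\|\, \mathcal{U})$, that is $2\,\bias(\D,\mathbf{t})^2 \le D_{\mathrm{KL}}(\overline{\mathbf{P}}_{\D} \,\|\, \mathcal{U})$, which is precisely the reduced claim; substituting the identity from the first step recovers the two displayed forms (the second following by taking square roots, valid because $\bias(\D,\mathbf{t}) \le |\bias(\D,\mathbf{t})|$).

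The step I expect to require the most care is the correct handling of the logarithm base in Pinsker's inequality. Entropic expressivity, and hence the divergence $D_{\mathrm{KL}}(\overline{\mathbf{P}}_{\D}\,\|\,\mathcal{U})$, is measured in bits ($\log_2$), whereas the sharp constant $\tfrac12$ in Pinsker's inequality is usually stated for the natural logarithm. Converting to base $2$ only improves the constant, replacing $\tfrac12$ by $\tfrac{\ln 2}{2} < \tfrac12$, so the stated bound remains valid, albeit with some slack; I would be explicit about this conversion rather than quoting Pinsker verbatim. The remaining steps, namely the mass-difference rewriting and the total-variation bound on the indicator functional, are routine but worth stating carefully, since they are what tie the abstract bias definition to a genuine distance between distributions.
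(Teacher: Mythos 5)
Your proof is correct, and it is essentially the canonical argument: the paper itself states this theorem without proof, restating it from Lauw et al.~\cite{lauw2020bias}, whose proof follows the same route you take --- rewrite $\bias(\D,\mathbf{t}) = \overline{\mathbf{P}}_{\D}(t) - \mathbf{P}_{\mathcal{U}}(t)$, bound this event-mass gap by total variation distance, apply Pinsker's inequality, and use $H(\overline{\mathbf{P}}_{\D}) = \log_2|\mathcal{G}| - D_{\mathrm{KL}}(\overline{\mathbf{P}}_{\D}\,\|\,\mathcal{U})$ to convert between the two displayed forms. Your explicit handling of the logarithm base (noting that in bits Pinsker gives the constant $\tfrac{\ln 2}{2} < \tfrac{1}{2}$, so the stated bound holds with slack) is a careful touch that the source glosses over, and your observation that the second inequality is trivial when the bias is negative correctly closes the only loose end in reducing both claims to $2\,\bias(\D,\mathbf{t})^2 \leq D_{\mathrm{KL}}(\overline{\mathbf{P}}_{\D}\,\|\,\mathcal{U})$.
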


Given the notions of inductive orientation and entropic expressivity,  we can define distributional algorithm capacity in terms of these quantities.

\begin{restatable}[Distributional Capacity as Entropic Expressivity]{theorem}{distributionalCapacityExpressivity}\label{thm:DISTRIBUTIONAL-CAPACITY-EXPRESSIVITY}

An algorithm's distributional capacity may be re-written as the difference between its entropic expressivity and its expected entropic expressivity, namely
\begin{align}
    C_{\mathcal{A},\D} &= H(\overline{\mathbf{P}}_{\mathcal{D}}) - \mathbb{E}_{\mathcal{D}}[H(\overline{\mathbf{P}}_{F})].
\end{align}
\end{restatable}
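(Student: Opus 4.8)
The plan is to expand the distributional capacity as a difference of entropies and then recognize each term as an entropic expressivity. By Definition~\ref{defn:distributional-capacity} we have $C_{\mathcal{A},\D} = I(G;D)$, and since $G$ and $D$ range over the discrete spaces $\mathcal{G}$ and $\mathcal{Z}^n$, the standard decomposition of mutual information applies, $I(G;D) = H(G) - H(G \mid D)$. The entire argument then reduces to identifying $H(G)$ with the unconditional entropic expressivity $H(\overline{\mathbf{P}}_{\D})$ and $H(G \mid D)$ with the expected entropic expressivity $\E_{\D}[H(\overline{\mathbf{P}}_F)]$.

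For the conditional term, I would first argue that, conditioned on a fixed information resource $F = d$ (here a dataset), the distribution governing the algorithm's output hypothesis is exactly the inductive orientation $\overline{\mathbf{P}}_d$, i.e. $P(G \mid D = d) = \overline{\mathbf{P}}_d$. Granting this, the conditional entropy unwinds directly: $H(G \mid D) = \E_{D\sim\D}[H(P(G \mid D))] = \E_{\D}[H(\overline{\mathbf{P}}_F)]$, which is precisely the expected entropic expressivity. For the marginal term, I would take the expectation of the conditional output distribution over $F \sim \D$; by the definition of inductive orientation, $P(G) = \E_{F\sim\D}[P(G \mid F)] = \E_{F\sim\D}[\overline{\mathbf{P}}_F] = \overline{\mathbf{P}}_{\D}$, so that $H(G) = H(\overline{\mathbf{P}}_{\D})$. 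Substituting both expressions back into the decomposition yields the claimed identity, with the discreteness assumptions of the paper guaranteeing every entropy is well defined and finite.

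The main obstacle is the identification $P(G \mid D = d) = \overline{\mathbf{P}}_d$, which bridges the single-output channel picture of Definition~\ref{defn:capacity} and the search-process picture underlying the inductive orientation. In the search framework the algorithm produces a whole trajectory of sampling distributions $\tilde{P} = [P_1,\ldots,P_N]$ together with a history $H$, whereas the mutual-information definition treats the algorithm as a fixed stochastic map $P_{G \mid D}$ emitting a single hypothesis $G$. I would reconcile these by taking the algorithm's effective per-query output distribution to be the expected average conditional distribution $\E_{\tilde{P},H}[\frac{1}{|\tilde{P}|}\sum_i \mathbf{P}_i \mid F]$, which is exactly $\overline{\mathbf{P}}_F$: selecting the reported hypothesis by drawing a uniformly random query time and then sampling from that time's distribution (and marginalizing the internal randomness of $\tilde{P}$ and $H$) makes $G$ distributed as $\overline{\mathbf{P}}_F$ given $F$. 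Once this modeling identification is made explicit, the remaining steps are routine manipulations of discrete entropy.
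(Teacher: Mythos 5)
Your proposal is correct and follows essentially the same route as the paper's proof: decompose $I(G;D) = H(G) - H(G\mid D)$, identify $H(G) = H(\overline{\mathbf{P}}_{\D})$ via marginalization $p(g) = \E_{\D}[\overline{P}_F(g)] = \overline{P}_{\D}(g)$, and identify $H(G\mid D) = \E_{\D}[H(\overline{\mathbf{P}}_F)]$ by expanding the conditional entropy. If anything, you are more careful than the paper, which silently equates $p(g\mid F=f)$ with $\overline{P}_f(g)$ in its conditional-entropy computation, whereas you explicitly state the modeling assumption (the reported hypothesis is drawn by averaging over the search trajectory) that justifies this identification.
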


\begin{proof}
Note that, by marginalization of $F \sim \mathcal{D}$ and the definition of $\overline{P}_{\mathcal{D}}$,
$$p(g) = 
\E_{\mathcal{D}}[p(g|F)] = \E_{\mathcal{D}}[\overline{P}_F(g)] = \overline{P}_{\mathcal{D}}(g).$$
 Therefore,
\begin{align*}
    H(G) &= -\sum_{g \in G}p(g) \log p(g) = -\sum_{g \in G} \overline{P}_{\mathcal{D}}(g) \log \overline{P}_{\mathcal{D}}(g) = H(\overline{\mathbf{P}}_{\mathcal{D}}).
\end{align*}
Furthermore,
\begin{align*}
    H(G|F) &=  \sum_f H(G|F=f) P(f) \\ 
    &=  -\sum_f \sum_g  P(f) [p(g|F=f) \log p(g|F = f)]
\end{align*}
\begin{align*}
&=  -\sum_f P(f) \left[\sum_g  p(g|F=f) \log p(g|F = f)\right] \\
&= E_{\mathcal{D}}\left[ -\sum_g  p(g|F) \log p(g|F)\right] \\
&= E_{\mathcal{D}}\left[-\sum_g  \overline{P}_{F}(g) \log \overline{P}_{F}(g)\right] \\
&= E_{\mathcal{D}}[H(\overline{\mathbf{P}}_F)].
\end{align*}

Then, by the definition of distributional capacity $C_{\mathcal{A}, \mathcal{D}}$ for $F\sim \D$,
\begin{align*}
    C_{\mathcal{A}, \mathcal{D}} &= I(G;F) = H(G) - H(G|F)  = H(\overline{\mathbf{P}}_{\mathcal{D}}) - \mathbb{E}_{\mathcal{D}}[H(\overline{\mathbf{P}}_{F})]. \hspace{1em} \qed
\end{align*}
\end{proof}
\textbf{Note:} Theorem~\ref{thm:DISTRIBUTIONAL-CAPACITY-EXPRESSIVITY} considers a distribution vector $\overline{\mathbf{P}}_{\D}$ that is averaged over all iterations of a search; if the distribution averaged over only the final iteration is desired, $\overline{\mathbf{P}}_{n,\D}$ and $\overline{\mathbf{P}}_{n,F}$ can be used instead, as detailed in \cite{sam2020DPSM}.

Theorem \ref{thm:DISTRIBUTIONAL-CAPACITY-EXPRESSIVITY} points towards a way of empirically estimating the quantity $C_{\mathcal{A},\D}$. We first form a \textit{labeling distribution matrix} (LDM) for algorithm $\mathcal{A}$ described in~\cite{pss-2020-LDM}: the matrix consists of $K$ simplex vectors $P_{f_1},...,P_{f_k}$, where simplex vector $P_{f_i}$ corresponds to the probability distribution that $\mathcal{A}$ induces over its search space $\mathrm{\Omega}$ after being trained on information resource $f_i$ drawn from $\mathcal{D}$. Taking the average of all columns converges toward $\overline{\mathbf{P}}_{\mathcal{D}}$ by the law of large numbers (with increasing $K$), and taking the entropy of the averaged column vector converges toward $H(\overline{\mathbf{P}}_{\mathcal{D}})$. Furthermore, averaging the entropies of each column vector in the matrix will converge toward $\mathbb{E}_{\mathcal{D}}[H(\overline{\mathbf{P}}_{F})]$ as $K$ increases.

\begin{restatable}[Algorithm Capacity as Entropic Expressivity]{corollary}{capacityExpressivity}\label{cor:CAPACITY-EXPRESSIVITY}
\begin{align}
    C_{\mathcal{A}} &= \underset{ \mathcal{D}}{\mathrm{sup}}[ H(\overline{\mathbf{P}}_{\mathcal{D}}) - \mathbb{E}_{\mathcal{D}}[H(\overline{\mathbf{P}}_{F})]]
\end{align}
\end{restatable}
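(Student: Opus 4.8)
The plan is to derive this corollary directly from Theorem~\ref{thm:DISTRIBUTIONAL-CAPACITY-EXPRESSIVITY} together with Definitions~\ref{defn:capacity} and~\ref{defn:distributional-capacity}. Recall that Definition~\ref{defn:capacity} gives $C_{\mathcal{A}} = \sup_{\mathcal{D}} I(G;D)$, while Definition~\ref{defn:distributional-capacity} identifies the inner quantity for a fixed data-generating distribution as $C_{\mathcal{A},\D} = I(G;D)$. Hence the first step is simply to rewrite the unconditional capacity as a supremum of distributional capacities, $C_{\mathcal{A}} = \sup_{\mathcal{D}} C_{\mathcal{A},\D}$.

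Next I would invoke Theorem~\ref{thm:DISTRIBUTIONAL-CAPACITY-EXPRESSIVITY}, which holds for every fixed $\D$ and expresses $C_{\mathcal{A},\D}$ as the difference $H(\overline{\mathbf{P}}_{\mathcal{D}}) - \mathbb{E}_{\mathcal{D}}[H(\overline{\mathbf{P}}_{F})]$. Because this identity is a pointwise equality valid for each $\D$ in the supremand, I can substitute it directly inside the supremum with no interchange of limiting operations required. This substitution immediately yields the claimed expression $C_{\mathcal{A}} = \sup_{\mathcal{D}}[H(\overline{\mathbf{P}}_{\mathcal{D}}) - \mathbb{E}_{\mathcal{D}}[H(\overline{\mathbf{P}}_{F})]]$.

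There is essentially no obstacle here, since both ingredients are already established; the only point worth a sentence of justification is that replacing $C_{\mathcal{A},\D}$ by an equal quantity preserves the supremum, which is immediate because the two functions of $\D$ agree everywhere on the domain over which the supremum is taken. If one wished to be fully rigorous about the existence of the supremum, I would also note that in the discrete setting to which the paper restricts, $I(G;D)$ is bounded above by $\log_2|\mathcal{G}|$, so the supremum is well-defined and finite.
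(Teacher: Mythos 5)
Your proposal is correct and follows exactly the route the paper intends: the corollary is an immediate consequence of writing $C_{\mathcal{A}} = \sup_{\mathcal{D}} C_{\mathcal{A},\mathcal{D}}$ via Definitions~\ref{defn:capacity} and~\ref{defn:distributional-capacity} and substituting the pointwise identity of Theorem~\ref{thm:DISTRIBUTIONAL-CAPACITY-EXPRESSIVITY} inside the supremum, which is why the paper states it without a separate proof. Your added remark that the supremum is well-defined because $I(G;D) \leq \log_2|\mathcal{G}|$ in the discrete setting is a small but sound bonus beyond what the paper records.
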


Expressing algorithm capacity in terms of entropic expressivity provides additional intuition about what precisely is being measured: Theorem \ref{thm:DISTRIBUTIONAL-CAPACITY-EXPRESSIVITY} illustrates that an algorithm's capacity may be interpreted as how much its entropic expressivity for a fixed distribution differs from its expected entropic expressivity. In other words, $H(\overline{\mathbf{P}}_{\mathcal{D}})$ captures how ``flat'' the expected induced probability distribution is, which could result either from averaging together flat distributions or by averaging together many ``sharp'' distributions that happen to place their mass on very different regions of the search space. In contrast, $\mathbb{E}_{\mathcal{D}}[H(\overline{\mathbf{P}}_{F})]$ measures how flat the induced distributions are in expectation. By subtracting the flatness aspect from the combined quantity that captures both flatness and dispersal of probability mass, we get a quantity that represents how much an algorithm shifts its probability mass in response to different information resources. The ability to do this is equivalent to the ability to store information (by taking on different configurations), and thus is a fitting measure of algorithm capacity. 

Furthermore, we can use the values of entropic expressivity to derive bounds on $C_{\mathcal{A},\mathcal{D}}$ based on the entropic expressivity bounds established in~\cite{lauw2020bias}. We explore these connections next.

\ifx

\begin{definition}[Dataset Complexity (adapted from \cite{li2006complexity})]
Given a fixed Turing Machine $M$ that accepts a string $p$ and feature vector $x$ as input and outputs a label $y$, the \textit{data complexity} of a dataset $D$ is $$C(D) = L(\langle M \rangle) + L(p)$$
where $L(p) = \min \{|p|: \forall (\textbf{x},y) \in D, M(p,\textbf{x}) = y\}$. That is, the data complexity $C_{\mathcal{U}}(D)$ is the length of the shortest program that can correctly map every input in the data set $\mathcal{D}$ to its corresponding output. 
\end{definition}

\begin{definition}[Saturation]
An algorithm $\mathcal{A}$ becomes \textbf{saturated} when $$C_D >> C_{\mathcal{A}}$$
due to the dataset's volume. That is, the volume of data is so large that the algorithm must exchange memorized bits of information for information concerning the true signal to reduce error.
\end{definition}

\begin{definition}[Excess Capacity]
An algorithm $\mathcal{A}$'s \textbf{excess capacity} with respect to a dataset $D$ is $$C_{excess} = C_{\mathcal{A}}-C_D$$ i.e. the capacity $\mathcal{A}$ has beyond what is needed to generalize from the given data.
\end{definition}
\fi

\section{\uppercase{Algorithm Capacity Bounds}}
\label{sec:ALG-CAPACITY-BOUNDS}
Our definition of algorithm capacity provides one concrete way of measuring what has traditionally been a loosely defined quantity. The goal of this section is to provide further insight by bounding algorithm capacity. For more detail on experimental methods to estimate algorithm capacity, we refer the reader to \cite{pss-2020-LDM}.

In addition, if we expand the possible hypothesis spaces under consideration to be real-valued functions, we obtain an upper bound on $C_{\mathcal{A}}$ in terms of the VC dimension as demonstrated in Section~\ref{sec:VCDIM-UPPER-BOUND}.

\subsection{Trade-off Bounds}
\label{sec:TRADE-OFF-BOUNDS}
First, we demonstrate how the bias-expressivity trade-off furnishes immediate bounds on algorithm capacity. Theorems \ref{thm:tradeoff} and \ref{thm:DISTRIBUTIONAL-CAPACITY-EXPRESSIVITY} give us our first capacity bound.

\begin{restatable}[Distributional Capacity Upper Bound]{theorem}{distributionalCapacityBound}
    \label{thm:dist-upper}
    \begin{align}
        C_{\mathcal{A},\mathcal{D}} \leq \log_2 |\mathcal{G}| - 2 \bias(\mathcal{D}, \mathbf{t})^2 - \mathbb{E}_{\mathcal{D}}[H(\overline{\mathbf{P}}_{F})].
    \end{align}
\end{restatable}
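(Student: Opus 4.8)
The plan is to combine the two results already in hand: the equality expressing distributional capacity as a difference of entropic expressivities (Theorem~\ref{thm:DISTRIBUTIONAL-CAPACITY-EXPRESSIVITY}) and the upper bound on entropic expressivity in terms of bias (Theorem~\ref{thm:tradeoff}). The right-hand side of the target inequality is precisely the trade-off bound with the expected entropic expressivity $\mathbb{E}_{\mathcal{D}}[H(\overline{\mathbf{P}}_{F})]$ subtracted off, so the argument should reduce to a single substitution rather than any new estimation.

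First I would invoke Theorem~\ref{thm:DISTRIBUTIONAL-CAPACITY-EXPRESSIVITY} to write the exact decomposition
$$C_{\mathcal{A},\mathcal{D}} = H(\overline{\mathbf{P}}_{\mathcal{D}}) - \mathbb{E}_{\mathcal{D}}[H(\overline{\mathbf{P}}_{F})].$$
Next I would apply the first inequality of Theorem~\ref{thm:tradeoff}, namely $H(\overline{\mathbf{P}}_{\mathcal{D}}) \leq \log_2 |\mathcal{G}| - 2\,\bias(\mathcal{D}, \mathbf{t})^2$, to bound only the leading term. Since $\mathbb{E}_{\mathcal{D}}[H(\overline{\mathbf{P}}_{F})]$ does not depend on $H(\overline{\mathbf{P}}_{\mathcal{D}})$ and enters with a minus sign, replacing $H(\overline{\mathbf{P}}_{\mathcal{D}})$ by the larger quantity preserves the inequality direction and gives
$$C_{\mathcal{A},\mathcal{D}} \leq \log_2 |\mathcal{G}| - 2\,\bias(\mathcal{D}, \mathbf{t})^2 - \mathbb{E}_{\mathcal{D}}[H(\overline{\mathbf{P}}_{F})],$$
which is the claimed bound.

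The only point requiring care, and the closest thing to an obstacle here, is ensuring the two prior theorems are instantiated with respect to the \emph{same} fixed distribution $\mathcal{D}$ and the same target vector $\mathbf{t}$, so that the bias term and the inductive-orientation entropy $H(\overline{\mathbf{P}}_{\mathcal{D}})$ refer to a common setup and the substitution is legitimate. Once that consistency is noted, the result follows immediately; no further inequalities or information-theoretic identities are needed.
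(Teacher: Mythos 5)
Your proof is correct and follows exactly the route the paper intends: substituting the trade-off bound $H(\overline{\mathbf{P}}_{\mathcal{D}}) \leq \log_2|\mathcal{G}| - 2\bias(\mathcal{D},\mathbf{t})^2$ from Theorem~\ref{thm:tradeoff} into the decomposition $C_{\mathcal{A},\mathcal{D}} = H(\overline{\mathbf{P}}_{\mathcal{D}}) - \mathbb{E}_{\mathcal{D}}[H(\overline{\mathbf{P}}_{F})]$ of Theorem~\ref{thm:DISTRIBUTIONAL-CAPACITY-EXPRESSIVITY}, which is precisely what the paper means when it states that those two theorems ``give us our first capacity bound.'' Your caveat about instantiating both theorems with the same $\mathcal{D}$ and $\mathbf{t}$ is a sensible consistency check, and nothing further is needed.
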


Using the range bounds from Theorem 5.3 in~\cite{lauw2020bias}, we can obtain even tighter bounds on $C_{\mathcal{A},\D}$ as a function of the bias, shown in Table~\ref{tab:capacityBounds}.
\begin{table}[H] 
    \centering
    \caption{Because the range of entropic expressivity changes with different levels of bias relative to target function $\mathbf{t}$, the maximum value for $C_{\mathcal{A},\D}$ does also.}
    \vspace{1em}
    \label{tab:capacityBounds}
    \begin{TAB}(r,0.3cm,1cm)[2pt]{|c|c|c|}{|c|c|c|c|}
        $\bias(\mathcal{D}, \mathbf{t})$ & $\mathbf{t}^{\top} \overline{\mathbf{P}}_{\D}$ & \textbf{Capacity Upper Bound}\\
        \begin{tabular}{@{}c@{}} $-p$ \\ (Minimum bias) \end{tabular} & $0$ &  $\log_2( |\mathcal{G}| - \|\mathbf{t}\|^2) - \mathbb{E}_{\mathcal{D}}[H(\overline{\mathbf{P}}_{F})]$ \\
        \begin{tabular}{@{}c@{}} $0$ \\ (No bias) \end{tabular} & $p$ & $\log_2 |\mathcal{G}| - \mathbb{E}_{\mathcal{D}}[H(\overline{\mathbf{P}}_{F})]$ \\
        \begin{tabular}{@{}c@{}} $1-p$ \\ (Maximum bias) \end{tabular} & $1$ & $\log_2 \|\mathbf{t}\|^2 - \mathbb{E}_{\mathcal{D}}[H(\overline{\mathbf{P}}_{F})]$ \\
    \end{TAB}
\end{table}

Furthermore, rewriting the mutual information as KL-divergence furnishes another bound on $C_{\mathcal{A},\mathcal{D}}$.

\begin{restatable}[Distributional Capacity KL-Divergence Bound]{theorem}{distributionalCapacityKL}
    \label{thm:dist-kl}
    \begin{align*}
        C_{\mathcal{A},\mathcal{D}} &= I(G;D) \\ &= \mathbb{E}_{\mathcal{D}}[D_{KL}(p_{G|D} || p_{G})] \\ &\leq \underset{\mathcal{D}}{\mathrm{sup}}[D_{KL}(p_{G|D} || p_{G})].
    \end{align*}
\end{restatable}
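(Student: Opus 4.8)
The plan is to verify the two equalities by unwinding definitions together with one standard information-theoretic identity, and then to obtain the inequality by bounding an expectation by a supremum. No step presents a genuine difficulty; the result is essentially a rewriting of Definition~\ref{defn:distributional-capacity} followed by a trivial averaging bound.

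First I would note that the opening equality $C_{\mathcal{A},\mathcal{D}} = I(G;D)$ is immediate from Definition~\ref{defn:distributional-capacity}, which for a fixed data-generating distribution $\mathcal{D}$ defines the distributional capacity to be exactly the mutual information between the output hypothesis $G$ and the input dataset $D$.

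Next, for the identity $I(G;D) = \mathbb{E}_{\mathcal{D}}[D_{KL}(p_{G|D} \| p_{G})]$, I would expand the mutual information as a discrete sum, which is permissible by the paper's standing restriction to discrete hypothesis spaces and datasets: $I(G;D) = \sum_{d}\sum_{g} p(g,d)\log_2\frac{p(g,d)}{p(g)p(d)}$. Factoring $p(g,d)=p(d)\,p(g|d)$ and grouping the sum over $g$ yields $\sum_{d} p(d)\sum_{g} p(g|d)\log_2\frac{p(g|d)}{p(g)}$, where the inner sum is precisely $D_{KL}(p_{G|D=d}\|p_{G})$. Hence $I(G;D)=\sum_d p(d)\,D_{KL}(p_{G|D=d}\|p_{G}) = \mathbb{E}_{D\sim\mathcal{D}}[D_{KL}(p_{G|D}\|p_{G})]$, as claimed.

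Finally, for the inequality I would invoke the elementary fact that the expectation of a random variable under $D\sim\mathcal{D}$ cannot exceed its supremum over the support of $\mathcal{D}$. Taking that random variable to be $D_{KL}(p_{G|D}\|p_{G})$ gives $\mathbb{E}_{\mathcal{D}}[D_{KL}(p_{G|D}\|p_{G})] \le \sup_{\mathcal{D}}[D_{KL}(p_{G|D}\|p_{G})]$, completing the chain. The only point requiring care — and the closest thing to an obstacle in an otherwise routine argument — is the reading of the notation: because $D_{KL}(p_{G|D}\|p_{G})$ depends on a realized dataset $D$ rather than on a data-generating distribution, the subscript $\mathcal{D}$ on the supremum must be interpreted as a worst-case supremum over datasets $D$ in the support of $\mathcal{D}$. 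Under that reading the bound states simply that the average information transfer is at most the worst-case information transfer, and nonnegativity of each KL term (Gibbs' inequality) confirms the quantities being compared are well-behaved.
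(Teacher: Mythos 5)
Your proof is correct and matches the argument the paper implicitly relies on: the paper states Theorem~\ref{thm:dist-kl} without a separate proof, treating the definitional equality, the standard identity $I(G;D)=\mathbb{E}_{\mathcal{D}}[D_{KL}(p_{G|D}\,\|\,p_{G})]$, and the expectation-versus-supremum bound as routine --- exactly the three steps you verify. Your resolution of the notational ambiguity, reading the supremum as a worst case over realized datasets in the support of $\mathcal{D}$ rather than over data-generating distributions, agrees with the paper's own gloss of the result (``the maximum divergence between a prior distribution over the hypothesis space and a posterior distribution over the hypothesis space, given a dataset'').
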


That is, the maximum information that can be transferred between a learning algorithm and a particular dataset is bounded above by the maximum divergence between a prior distribution over the hypothesis space and a posterior distribution over the hypothesis space, given a dataset.

\subsection{A VC Upper Bound}\label{sec:VCDIM-UPPER-BOUND}
The discussion in \cite{vapnik71uniform} allows us to recover an upper bound on algorithm capacity: the logarithm of the well-known VC dimension, which provides a value in bits.

\begin{restatable}[VC Dimension as Information Complexity]{theorem}{vcInfo}\label{thm:vc-info}
Suppose algorithm $\mathcal{A}$ utilizes a hypothesis space of real-valued functions $\mathcal{G}$. Then
$$C_{\mathcal{A}} \leq \log d_{VC}(\mathcal{G}).$$
\end{restatable}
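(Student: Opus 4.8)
The plan is to strip the capacity down to a one-sided entropy bound and then import the VC characterization of real-valued function classes from \cite{vapnik71uniform}.

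First I would dispense with the conditioning. For every data-generating distribution $\mathcal{D}$ we have $I(G;D) = H(G) - H(G \mid D)$, and since the discrete conditional entropy is nonnegative, $I(G;D) \le H(G)$. Taking the supremum over $\mathcal{D}$ therefore gives $C_{\mathcal{A}} = \sup_{\mathcal{D}} I(G;D) \le \sup_{\mathcal{D}} H(G)$. I would note that this discards nothing essential: for a deterministic algorithm $H(G \mid D) = 0$, so $I(G;D) = H(G)$ exactly, and it is precisely such maximally informative (memorizing) algorithms that drive the supremum defining $C_{\mathcal{A}}$.

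Second I would bound the output entropy by a log-count. Because the Shannon entropy of a discrete variable is maximized by the uniform distribution over its support, $\sup_{\mathcal{D}} H(G) \le \log N$, where $N$ is the number of \emph{effectively distinguishable} output hypotheses, i.e.\ those the channel from data to model can actually separate through their behavior on the sample. The entire content of the theorem then lives in identifying this effective count.

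The hard part will be the final step: showing $\log N \le \log d_{VC}(\mathcal{G})$, i.e.\ that for a hypothesis space of real-valued functions the effective number of distinguishable outputs is governed by the VC dimension itself rather than by a sample-size-dependent growth function. Here I would invoke the discussion in \cite{vapnik71uniform} directly, using the pseudo-dimension / subgraph formulation of $d_{VC}$ appropriate to real-valued classes. The subtlety I expect to wrestle with is that the naive counting route through Sauer's lemma yields only a growth bound of order $(en/d_{VC})^{d_{VC}}$, which diverges in the sample size $n$; obtaining the clean, $n$-free bound $\log d_{VC}(\mathcal{G})$ requires the capacity notion of the cited work rather than a crude enumeration of dichotomies. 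Chaining the three inequalities then delivers $C_{\mathcal{A}} \le \log d_{VC}(\mathcal{G})$.
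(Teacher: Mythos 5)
There is a genuine gap, and it sits exactly where you predicted it would: the final step from an ``effective count'' $N$ of distinguishable outputs to $\log N \le \log d_{VC}(\mathcal{G})$ is never carried out, and under any natural reading it is false. If $\mathcal{G}$ shatters a set of size $d = d_{VC}(\mathcal{G})$, a memorizing algorithm can realize all $2^d$ dichotomies on that set as distinct outputs, so the output entropy $H(G)$ --- which your first two steps use as the governing quantity --- can be as large as $d$ bits, strictly exceeding $\log_2 d$ for every $d \ge 2$. In other words, the chain $C_{\mathcal{A}} \le \sup_{\mathcal{D}} H(G) \le \log N$ has already discarded too much: no counting argument (Sauer's lemma or otherwise) can recover an $n$-free bound of $\log d_{VC}$ from a support-size bound on $G$, because the support-size bound is genuinely of order $d_{VC}$ bits, not $\log d_{VC}$ bits. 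Your own observation that Sauer gives $d_{VC}\log(en/d_{VC})$ was the warning sign; the resolution is not a sharper enumeration but a different identification of what ``capacity'' means in the reduction.

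The paper's proof takes that different route. It never passes through $I(G;D) \le H(G)$ at all. Instead it equates the algorithm's capacity with the Vapnik--Chervonenkis \emph{entropy} $H^S(l_0)$ of the system of events $S = \mathcal{G}$, using the identity from Lemma 4 of \cite{vapnik71uniform}: for $l = n l_0$,
\begin{align*}
H^S(l_0) = \mathbb{E}\left[\frac{1}{n}\sum_{i=0}^{n-1}\log_2 \Delta^S\bigl(x_{il_0+1},\ldots,x_{(i+1)l_0}\bigr)\right],
\end{align*}
i.e., an \emph{expected log-index over blocks of the sample}, and then bounds each random $\log_2 \Delta^S(\cdot)$ by $\log_2$ of its maximum over samples --- the growth function --- which the paper identifies with $d_{VC}(\mathcal{G})$. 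So the $\log$ appears because the expectation of a logarithm of a (sample-dependent) index is bounded by the logarithm of the maximal index, not because the number of reachable hypotheses is small. The load-bearing steps are two modeling identifications your proposal does not make: $C_{\mathcal{A}} = H^S(l_0)$, and growth function $=$ $d_{VC}$; without them (and they are assertions of the paper rather than consequences of your Definition~\ref{defn:capacity}), your decomposition cannot reach the stated bound.
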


\section{\uppercase{CONCLUSION}}
Confronted with the ever-present dangers of overfitting and underfitting, we develop an information-theoretic perspective for understanding these phenomena, allowing us to characterize when they can occur and to what degree. We do so by considering the capacities of algorithms, complexities of datasets, and their relationship. In particular, we characterize overfitting as a symptom of mismatch between an algorithm's informational capacity and the complexity of the relationship it is attempting to learn. In colloquial terms, we have met the enemy, and it is mismatched capacity.

After introducing variations on algorithm capacity and recasting overfitting and underfitting as the relationship between algorithm capacity and dataset complexity, we give bounds on the algorithm capacity. We demonstrate that while the problem of determining whether an arbritrary classification algorithm will overfit a given dataset is formally undecidable, we can estimate the quantities proposed in this paper to determine when algorithm will overfit in expectation, and in some cases, when an algorithm's model overfits a dataset. Algorithm capacity estimation is the subject of future work.

Our methods make use of existing machinery from other frameworks, such as the algorithmic search framework and VC theory, which provide helpful characterizations and bounds for our current investigation. In particular, showing that distributional algorithm capacity can be written as a function of entropic expressivity allows us to gain insight into what algorithmic capacity means geometrically, in terms of shifted probability mass. In the future, the information-theoretic groundwork laid here will allow us to incorporate and extend other existing work, such as establishing a direct connection between the notions of bias, expressivity, and generalization, proving generalization bounds under our definitions and through our bounds on algorithm capacity.

\bibliographystyle{splncs04nat}
\bibliography{bibliography}

\section*{\uppercase{APPENDIX}: Supplementary Proofs}


\undecidabilityOverfitting*
\begin{proof}

\begin{figure*}[htbp]
  \centering
  \resizebox{.65\textwidth}{!}{
\begin{tikzpicture}[mybackground={M$_{\text{halt}}$}]

    \node [input, name=text1] {$\langle M \rangle, \omega$};
    \node [block, right=1.0cm of text1] (text2) {$\mathcal{A}'$\\Builder};
    \node [block, right of=text2] (text3) {$M_{\text{overfit}}$};
    
    \node [above right=0cm and 0.9cm of text3, node distance=5cm] (text4) {\color{green} Yes};
    \node [below of=text4, node distance=1.5cm] (text5) {\color{red} No};

    \begin{scope}[on background layer]
    \node [container,fit=(text2) (text3)] (container) {};
\end{scope}
    \draw [->] (text1) -- (text2);
    \draw [->] (text2) -- node [text width=2cm,midway,above,align=center ] {$\langle\mathcal{A}'\rangle, D$} (text3);
    \draw [->] (text3) -- node [above] {Yes} (text4);
    \draw [->] (text3) -- node [below] {No} (text5);
\end{tikzpicture}
  }
  \caption{$M_{\text{halt}}$ constructed using $M_{\text{overfit}}$.}
\end{figure*}
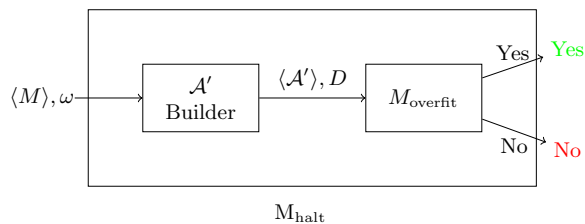

We show that $\LO$ is undecidable by a reduction from the halting problem. Assume, by way of contradiction, that $\LO$ is decidable. Then there exists a Turing machine, $M_{\text{overfit}}$, which for all inputs of the form $\langle \mathcal{A}\rangle$, $D$ halts and determines whether $\mathcal{A}$ will overfit $D$, either once training ends or asymptotically. 

    We construct a decider for $L_{\text{halt}}$ as follows. First, we create another machine called \textit{$\mathcal{A}'$ builder}, which takes as input $\langle M \rangle, w$, which are a Turing machine encoding and input string, respectively. $\mathcal{A}'$ builder constructs an encoded algorithm $\mathcal{A}'$ which represents an iterative machine learning method, and exports the encoded algorithm along with a training dataset $D$, consisting of some finite number of input features and output responses drawn from a generating distribution $\mathcal{D}$, where the output responses are drawn from a finite set of class labels. $\mathcal{A}'$ works in the following way. On its first iteration (or epoch), it instantiates a machine learning model which produces maximally wrong (under the fixed loss function) response values for all training examples in $D$, and uniformly randomly guesses response values for all examples not in $D$. Thus, as long as the probability of $D$ is not 1 under $\mathcal{D}$ (i.e., the distribution can produce some dataset other than $D$), $\mathcal{A}'$ will have lower expected error on any test set from $\mathcal{D}$ than its observed error on $D$. Therefore, the initial model does not overfit under Definition~\ref{defn:obs_overfitting}, since it will have higher prediction error on training data than on test data in expectation.
    
    Next, $\mathcal{A}'$ simulates $M$ on $w$. If $M$ halts on $w$, $\mathcal{A}'$ updates its internal model to memorize dataset $D$ via a simple look-up table, while still uniformly randomly guessing for all examples not in the training dataset. In this case, all training error is eliminated, and testing error will be higher in expectation than training error, due to memorization. If $M$ does not halt on $w$, the original non-overfitting model remains, and the algorithm only ever completes its single, initial iteration. Thus, $\mathcal{A}'$ will overfit if and only if $M$ halts on $w$. 
    
    Now, assuming that $M_{\text{overfit}}$ exists, we pass the outputs of $\mathcal{A}'$ builder to it, and ask if $\mathcal{A}'$ will eventually overfit on $D$. The answer will immediately tell us if $M$ halts on $w$, since it will eventually overfit if and only if $M$ halts on $w$. The outputs from $M_{\text{overfit}}$ are assigned to the output of $M_{\text{halt}}$, giving us a decider for $L_{\text{halt}}$, a contradiction. Thus, contrary to our initial assumption $M_{\text{overfit}}$ cannot exist, and $\LO$ is undecidable.
\end{proof}


\vcInfo*
\begin{proof}

We follow the conventions in~\cite{vapnik71uniform}.

Let $X$ be a set of elementary events on which a probability measure $P_X$ is defined. Let $S$ be a collection of events (subsets of $X$) which are measurable with respect to $P_X$. Let $X_r = x_1,...,x_r$ be a finite sample of elements of $X$. Each set $A$ in $S$ determines in $X_r$ a subsample $X_r^A = x_{i_1},...,x_{i_k}$ consisting of terms in the sample $X_r$ which belong to $A$.

The number of different subsamples of $X_r$ induced by sets in $S$ is defined as the \textit{index} of the system $S$ with respect to the sample $x_1,..,x_r$ and denoted $\Delta^S(x_1,...,x_r)$, which is upper bounded by $2^r$. The maximum of these indices over all samples of size $r$ is the \textit{growth function}: $$m^S(r) = \max \Delta^S(x_1,...,x_r).$$
In the learning setting, $S$ denotes the hypothesis space $\mathcal{G}$ of algorithm $\mathcal{A}$. $X_r$ is our dataset. Then the growth function is exactly the VC dimension $d_{VC}$. 

Finally, Vapnik and Chervonenkis define the entropy of the system of events $S$ in samples of size $l$ as $H^S(l)$. In their proof of Lemma 4, it is noted that if $l = nl_0$ (i.e. we partitioned $l$ into $n$ even parts), the expectation of the logarithms of indexes is as follows: $$\mathbb{E} \left[ \frac{1}{n} \sum_{i=0}^{n-1} \log_2 \Delta^S(x_{il_0+1},...,x_{(i+1)l_0}) \right] = H^S(l_0).$$

The collection of events $S$ maps to the hypothesis space $\mathcal{G}$, while our dataset sample is $X_r$. Further, the algorithm's entropy, i.e., the capacity, is the quantity $H^S(l_0)$. As pointed out earlier, $d_{VC}(\mathcal{H}) = m^{\mathcal{H}}(r)$.

Then, using the expectation formula and the concavity of $\log$:
\begin{align*}
    H^S(l_0) &= \E \left[ \frac{1}{n} \sum_{i=0}^{n-1} \log_2 \Delta^S(x_{il_0+1},...,x_{(i+1)l_0}) \right] \\ &\leq  \frac{1}{n} \sum_{i=0}^{n-1} \log_2 \left( \max \Delta^S(x_{il_0+1},...,x_{(i+1)l_0}) \right) \\ &= \frac{1}{n} \sum_{i=0}^{n-1} \log_2 d_{VC}(S) = \log_2 d_{VC}(S).
\end{align*}
\end{proof}

\noindent\textbf{Claim:} For deterministic algorithms, $C_{\mathcal{A}}(g,d)$ becomes the Shannon surprisal of the set of datasets producing $g$, $C_{\mathcal{A}}(g,d) = -\log_2 \D(S)$, where $$S =\{d' \in \mathcal{Z}^n \mid \mathcal{A}(d') = g\}$$ is the collection of datasets $d'$ such that $p(g|d') = 1$ under $\mathcal{A}$.

\begin{proof}
Recall, from Definition 5, that $C_{\mathcal{A}}(g,d) = \log_2\frac{p(g|d)}{p(g)}$. If $p(g|d) = 1$ (i.e., deterministic algs),
\begin{align*}
    \log_2 \frac{p(g|d)}{p(g)} &= -\log_2 p(g)\\
        &= -\log_2 \sum_{d'}p(g,d')\\
        &= -\log_2 \sum_{d'}p(g|d')p(d')\\
        &= -\log_2 \sum_{d':p(g|d') = 1}p(d')\\
        &= -\log_2 \sum_{d':p(g|d') = 1}\D(d')\\
        &= -\log_2 \sum_{d'\in S}\D(d')\\
        &= -\log_2\D(S)
\end{align*}
where $S = \{d' \in \mathcal{Z}^n \mid p(g|d') = 1\}$.
\end{proof}

\end{document}